\newtheorem{proposition}{Proposition}
\newtheorem{theorem}{Theorem}
\author{Kaier Liang$^{1}$, Guang Yang$^{2}$, Mingyu Cai$^{3}$, Cristian-Ioan Vasile$^{1}$
\thanks{$^{1}$Kaier Liang and Cristian-Ioan Vasile  are with the Mechanical Engineering and Mechanics Department at Lehigh University, PA, USA: \{kal221, cvasile\}@lehigh.edu}
\thanks{$^{2}$Guang Yang {\tt\small gyang5052@gmail.com}}
\thanks{$^{3}$Mingyu Cai is with the Department of Mechanical Engineering at University of California Riverside, CA, USA: {\tt\small mingyu.cai@ucr.edu}}
}
\begin{document}
\title{\Large \bf Safe Navigation in Dynamic Environments Using Data-Driven Koopman Operators and Conformal Prediction}
\maketitle
\begin{abstract}
We propose a novel framework for safe navigation in dynamic environments by integrating Koopman operator theory with conformal prediction. Our approach leverages data-driven Koopman approximation to learn nonlinear dynamics and employs conformal prediction to quantify uncertainty, providing statistical guarantees on approximation errors. This uncertainty is effectively incorporated into a Model Predictive Controller (MPC) formulation through constraint tightening, ensuring robust safety guarantees. We implement a layered control architecture with a reference generator providing waypoints for safe navigation. The effectiveness of our methods is validated in simulation. 
\end{abstract}

\section{Introduction}
\label{sec:intro}
Autonomous navigation represents one of the most critical challenges in robotics research.  Advanced applications—including drone delivery, self-driving vehicles, and emergency response with autonomous system all require motion planning frameworks that can guide robots through complex environments with precision, safety, and efficiency. Despite significant advances in the field, there are two challenges: nonlinear dynamics and state uncertainty. First, many real-world robotic platforms exhibit nonlinear dynamics, such as quadrupeds, bipedal robots and soft robots. Traditional control approaches that rely on standard linearization techniques \cite{khalil2002nonlinear} or simplified models often fail to capture the full  complexity of these systems, and therefore achieve underwhelming performance. Second, any autonomous system modeling can introduce errors, often included in state uncertainty. This uncertainty manifests in multiple forms: sensor measurement noise, imperfect dynamics models, and unmodeled disturbances. Any model-based control approaches rely on an accurate model to perform well. These two challenges create barrier to developing reliable autonomous navigation systems for safety-critical applications. 

Originating from studying fluid dynamics \cite{brunton2021modern}, there are recent works that utilize Koopman Operators to control nonlinear system \cite{shi2024koopman,abraham2017model, bruder2019modeling, mamakoukas2019local}. In particular, it has been demonstrated that Koopman Operator methods allow the controller to directly handle highly nonlinear dynamical systems, such as soft robots~\cite{bruder2019modeling}, robotic fish~\cite{mamakoukas2019local} and quadcopters~\cite{10253454}. It has also been demonstrated that the Koopman operator approach outperforms standard feedback linearization techniques \cite{brunton2021modern} in many applications. The intuition is nonlinear dynamics can be fully expressed by infinite-dimensional linear operators in Hilbert spaces. As a result, a linear representation of the original nonlinear system can be obtained without losing model correctness. In practice, only a finite number of states of the representation is used to best approximate infinite-dimensional operators, and the observation function can be either hand-designed \cite{netto2021analytical} or directly learned using deep learning \cite{lusch2018deep}. As a trade off, the approximation error arrises due to the finite representation. Therefore, tracking performance could be affected when using it for model-based controllers. In addition, the design of the lifting functions is critical \cite{bruder2019modeling}, with approaches ranging from simple polynomial functions to radial basis functions. Recently, neural networks have emerged as an attractive method for learning these functions. That being said, upon deriving the Koopman operator-based model, the design of optimal controllers becomes straightforward thanks to the linear representation \cite{korda2018linear}.

Conformal Prediction provides a versatile framework applicable across machine learning domains, such as classification tasks \cite{norinder2017binary, zecchin2024generalization}, by dynamically generating prediction sets that quantify confidence levels during inference. This methodology has recently gained traction in robotics applications \cite{lindemann2023safe, yang2023safe}, where it establishes reliable confidence bounds around prediction models, effectively quantifying uncertainties related to dynamic obstacles and system states in autonomous platforms. Furthermore, \cite{chee2024uncertainty} incorporates exogenous disturbances into the dynamic system model and proposes a reference generator scheme that enables safe navigation under the assumption of close tracking performance.

However, the assumption of exchangeability of the data can be violated in many robotics applications \cite{barber2023conformal} due to distribution shift. To overcome this issue, weighted conformal prediction is introduced in \cite{tibshirani2019conformal}. In the context of motion planning, it has been shown its success in combination with model predictive controllers to achieve safe navigation with probabilistic guarantee \cite{lindemann2023safe, chee2024uncertainty}, thanks to its flexibility to adapt to any type of learned models.

The Model Predictive Control (MPC) has been widely used in robotics \cite{erez2013integrated, carron2019data, nascimento2018nonholonomic}. The downside of this approach is the requirement for an accurate model. While many robotics systems can be described by differential equations with analytical forms \cite{sa2012system}, there exist many robotics systems for which closed-form solutions are difficult to derive. Recently, data-driven MPC approaches \cite{amos2018differentiable, wang2022improved, wang2022koopman} have gained significant prominence in the field. There are also works that combine MPC and learning to achieve safe navigation \cite{yang2020model, hewing2020learning, koller2018learning}. Recently, with the integration of learned Koopman operators, the model based control approach demonstrates remarkable effectiveness for designing both Linear Quadratic Regulator controllers \cite{yang2023lqr, manaa2024koopman} and Model Predictive Controllers \cite{wang2022improved}, wherein linear representations of the nonlinear systems are derived through data-driven Koopman methods. Notably, \cite{wang2022improved} demonstrates that Koopman-based MPC successfully controls highly nonlinear soft actuators. In this work, we aim to further explore the use of data-driven MPC for controlling nonlinear systems.

In this paper, we present a unified control framework that combines Koopman operator theory with conformal prediction to manage nonlinear dynamical systems while ensuring safe navigation under system uncertainty. Our approach consists of two key phases: offline training and online control synthesis. In the offline phase, we train a Koopman-lifted system using data collected from the nonlinear system. During online control synthesis, a reference generator produces state references, which are fed into a Model Predictive Controller (MPC) operating on the trained Koopman-lifted linear system dynamics. Conformal prediction is employed to account for approximated state errors with probabilistic guarantees. To preserve the linearity of the MPC, obstacles are represented using linear constraint formulations. This framework enables the implementation of a computationally efficient linear MPC that effectively controls nonlinear systems with probabilistic safety assurances.

Our key contributions are as follows: We propose a control framework for nonlinear systems using Model Predictive Control (MPC), integrating Koopman operator theory and conformal prediction. Our approach explicitly accounts for Koopman state error and incorporates its uncertainties directly into the MPC constraints. To ensure the convexity of the MPC optimization problem, we approximate both static and dynamic obstacles using a set of linear constraints. Additionally, we introduce a hierarchical control architecture, where a reference generator guides the MPC controller, enhancing system performance and constraint satisfaction. Our method demonstrates a significant improvement in navigation performance compared to the baseline. Finally, we provide a theoretical proof of the proposed control framework, which is validated through simulation.

% (5) Dynamic obstacle using circular, Static obstacle using polygon. Reference controller has a switching mechanism.

% (5) Compare against RL  based method. 

%Discuss about obstacle uncertainty can be incorporated in this paper.  Compare against RL as baseline since both are model free. 

\section{Preliminaries and Problem Formulation}
\label{sec:preliminary}
In this section, we introduce notations and provide background on the key frameworks utilized in our approach and formulate the problem statement.
\subsection{Dynamical System}
Consider a discrete-time dynamical system
\begin{equation}\label{eq:dynamicSystem}
\begin{aligned}
    x_{k+1} &= f(x_k, u_k), \\
    x_0 &= x_{o}    
\end{aligned}
\end{equation}
with $x_k \in \mathcal{X} \subset \mathbb{R}^n$, $u_k \in \mathcal{U} \subset \mathbb{R}^m$, where $\mathcal{X}$ and $\mathcal{U}$ are  permissible state and control sets, respectively. The $x_{o} \in \mathcal{X}$ is the initial state. The function $f: \mathbb{R}^{n} \times \mathbb{R}^{m} \rightarrow \mathbb{R}^{n}$ denotes the dynamics of the system.

\subsection{Koopman Operator}
Given the dynamical system \eqref{eq:dynamicSystem}, let $g(x_k) \in \mathcal{G}$ be an observation function that maps the state $x_k$ to an observable $y_k$:
\[
y_k = g(x_k)
\]
The \( y_k \) is called an observable that measures the state in the Hilbert space of the original dynamical system. The Koopman operator $\mathcal{K}$ governs the evolution of the observable over time. The application of the Koopman operator $\mathcal{K}$ on the observable $g(x_k)$ can expressed as
\[
(\mathcal{K} g)(x) = g(x) \circ f(x),
\]
where $\circ$ denotes the composition operator.
% where $\nabla g(x)$ is the gradient of the observable with respect to the state $x$. 
Note that for any functions $g_1,g_2: \mathcal{X} \to \mathcal{G} \text{ and all } a_1,a_2 \in\mathbb{R}$, we have

\begin{align*}
\mathcal{K}(a_1g_1 + a_2g_2) &= (a_1g_1 + a_2g_2)\circ f \\
&= a_1g_1 \circ f + a_2g_2 \circ f \\
&= a_1 \mathcal{K}(g_1) + a_2 \mathcal{K}(g_2).
\end{align*}

\subsection{Conformal Prediction}
Conformal prediction is a statistical framework for constructing prediction regions with provable  guarantees\cite{vovk2005algorithmic}. 
Given a dataset $\mathcal{D}=\left\{\left(z_i, y_i\right)\right\}_{i=1}^n$, where $z_i$ represents input features and $y_i$ represents outputs, and a prediction model $\hat{f}$, the nonconformity score for a sample is defined as $s_i=$ $d\left(y_i, \hat{f}\left(z_i\right)\right)$, where $d$ is a suitable distance measure. For a new input $z_{\text {new }}$ and a user-defined failure probability $\alpha \in(0,1)$, the prediction region is constructed as $\mathcal{C}_\alpha\left(z_{\text {new }}\right)=\{y \mid \left.d\left(y, \hat{f}\left(z_{\text {new }}\right)\right) \leq Q_{1-\alpha}\right\}$, where $Q_{1-\alpha}$ is the $(1-\alpha)$-th quantile of the calibration set of nonconformity scores. This guarantees $\mathbb{P}\left(y_{\text {new }} \in \mathcal{C}_\alpha\left(z_{\text {new }}\right)\right) \geq 1-\alpha$.

Conformal Prediction differs fundamentally from Bayesian methods by requiring minimal distributional assumptions, making it especially valuable when the underlying data distribution is unknown. Rather than relying on specific parametric models, CP's theoretical guarantees stem from a single assumption: data exchangeability. This property ensures that the joint distribution of the data remains invariant under any permutation of sample indices.

\subsection{Problem Formulation}

Given a data set $\mathcal{D} = \{(x_k, u_k, x_{k+1})_{i}\}_{i=1}^M$ collected from a dynamical system, our goal is to navigate safely from an initial state to a target state while maintaining safety constraints.

Let $\mathcal{F} \subset \mathbb{R}^n$ denote a safe set in state space. For the initial and target states $x_{o}, x_{target} \in \mathcal{F}$, our objective is to synthesize a control sequence $\{u_k\}_{k=0}^{N-1}$ that guides the system to the destination within a small tolerance $\delta$ while ensuring that all intermediate states remain within the safe set:

\begin{equation}\label{eq:mpc}
\begin{aligned}
    &x_{k+1} = f(x_k, u_k), \\
    &x_0 = x_{o}\\
    &\|x_N - x_{target}\| \leq \delta\\
    &\quad \text{s.t. }x_{k} \in \mathcal{F}, \forall k \in \{0,1,\ldots,N\},
\end{aligned}
\end{equation}

where $f(x_k, u_k)$ is the unknown dynamics.

\section{Solution}
Since the true dynamics $f$ are unknown, we leverage the data set $\mathcal{D}$ to learn a Koopman operator representation of the nonlinear dynamics, quantify uncertainty in the learned model using conformal prediction, and design a safe control strategy with statistical guarantees. This data-driven formulation enables safe navigation even when the underlying system dynamics are not explicitly known but can be observed through collected data. The overall structure is given in Alg.~\ref{alg:koopman_rg_mpc}, where we first learn the Koopman representation and quantify uncertainty during the offline phase, followed by an online control phase that leverages a reference generator (RG) and model predictive control with tightened constraints obtained from conformal prediction. In the following sections, we detail each component of this framework.

\subsection{Uncertainty quantification for Koopman Operator}
In general, the Extended Dynamic Mode Decomposition (EDMD) \cite{ouala2023extending} enables the selection of modes explicitly. In this work, we are interested in learning Koopman operator using data-driven approach, enabling a linear representation without handcrafting the lifting function. More specifically, we obtain a control-affine approximation of the Koopman dynamics in the lifted space expressed as
\begin{equation}
\label{eq:lifted_system}
    z_{k+1} = A z_k + B u_k
\end{equation}
where $A \in \mathbb{R}^{p \times p}$ and $B \in \mathbb{R}^{p \times m}$ are matrices approximating the Koopman operator, $p$ is the dimension of the lifted space in $z$, where $z_k = g(x_k)$. The lifting function $g$ is expressed as an encoder using neural network. 
The training process incorporates reconstruction loss, linear dynamics loss, multi-step prediction loss and infinity norm loss as in~\cite{lusch2018deep,wang2022data}.
% \kaier{
% if necessary
% \begin{align*}
%     L_{\text{recon}} &= \|x - g^{-1}(g(x))\|²\\
% L_{\text{multi}} &= \sum_{i}^{H} \|x_{t+i} - g^{-1}(A·g(g(x_t)) + Bu)\|\\
% L_{\text{linear}}&= \|g(x_{t+1}) - A(g(x_t)) + Bu\|²
% \end{align*}
% }

In theory, an infinite-dimensional Koopman operator representation can perfectly model a nonlinear dynamical system. However, in practice, finite-dimensional approximations in the lifted space~\eqref{eq:lifted_system} will introduce approximation errors when mapping between the original system dynamics and their lifted counterparts. 
We denote the error as $e = \|x - \hat{x}\|$, which quantifies the discrepancy between the true state $x$ and the state $\hat{x}$ reconstructed from the lifted representation, where $\hat{x} = g^{-1}(z)$ is the predicted state obtained by applying the inverse mapping from the Koopman space back to the original state space.
To address this uncertainty, we employ conformal prediction to establish statistically rigorous upper bounds on these approximation errors. 

Conformal prediction requires that the data used to compute for the prediction intervals and for calibration are exchangeable. 
To ensure that the exchangeability assumption holds in our framework, we collect calibration data using control inputs that mimic the structure and characteristics such as reference tracking that will be used during deployment using the trained Koopman model in line 3 of Alg.~\ref{alg:koopman_rg_mpc}. 

Specifically, our calibration data set $\mathcal{D}_{\text{cal}} = \{(x_i, \hat{x}_i)\}_{i=1}^n$ consists of pairs of true states $x_i$ and their Koopman predicted counterparts $\hat{x}_i$, gathered from trajectories that reflect typical system operation. This approach ensures that the error distribution encountered during actual navigation closely matches that observed during calibration. Therefore, the non-conformity scores $s_i = \|x_i - \hat{x}_i\|$ computed from these calibration data provide valid prediction intervals when applied to new states during online operation. 

Given a new state $x_k$ at time step $k$, we compute its prediction $\hat{x}_{k+1} = g^{-1}(A g(x_k) + B u_k)$ using the Koopman operator matrices $A$ and $B$. To establish prediction intervals with confidence level $1-\alpha$, we compute:
$$Q_{1-\alpha}(S) = \text{Quantile}_{1-\alpha}(\{s_1, s_2, ..., s_n\})$$
where $\text{Quantile}_{1-\alpha}$ represents the $(1-\alpha)$-th quantile of the empirical distribution of nonconformity scores. This yields the uncertainty interval:
$$\mathcal{C}(\hat{x}_k) = \{x \in \mathbb{R}^n : \|x - \hat{x}\| \leq Q_{1-\alpha}(S)\}$$
This approach ensures that the actual state $x_{k}$ will be contained within $\mathcal{C}(\hat{x}_k)$ with probability at least $1-\alpha$, provided that the nonconformity scores are exchangeable random variables.
% This exchangeability condition can also be alleviated by using weighted conformal prediction (WCP)\cite{barber2023conformal}.  However, this introduces a computational burden and requires proper weight selection and results in a weaker bound.
The exchangeability condition can be alleviated by using weighted conformal prediction (WCP)\cite{barber2023conformal}, which computes a weighted quantile:
$$Q_{1-\alpha}(S) = \text{Quantile}_{1-\alpha}\left(\sum_{i=1}^n w_i \delta_{s_i} + w_{n+1}\delta_{\infty}\right)$$
where $\{w_i\}_{i=1}^{n+1}$ are normalized weights and $\delta_a$ denotes a point mass at $a$. However, WCP introduces additional computational burden and requires careful weight selection, potentially resulting in wider prediction intervals.

\begin{proposition}
\label{prop:cp}
Consider a Koopman dynamics in lifted space in~\eqref{eq:lifted_system}, for any control $u\in\mathcal{U}$. Let $\{(x_i, \hat{x}_i)\}_{i=1}^{n}$ be a calibration dataset where $x_i$ represents the true state and $\hat{x}_i$ the state estimated using the Koopman model with stochastic sensor measurements. Assume the nonconformity scores $s_i = \|x_i - \hat{x}_i\|$ are exchangeable random variables. For a new state estimation pair $(x_{n+1}, \hat{x}_{n+1})$ and a failure probability $\alpha \in (0,1)$, the conformal prediction region $\mathcal{C}_{1-\alpha}$ satisfies:
\begin{equation}
\mathbb{P}(\|x_{n+1} - \hat{x}_{n+1}\| \leq Q_{1-\alpha}) \geq 1-\alpha
\end{equation}
where $Q_{1-\alpha} = \text{Quantile}_{1-\alpha}\{s_1, \ldots, s_n, \infty\}$ is the $(1-\alpha)$-th quantile of the empirical distribution of nonconformity scores.
\end{proposition}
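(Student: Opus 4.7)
The plan is to reduce the claim to the classical split conformal prediction coverage guarantee, since the only probabilistic assumption in play is exchangeability of the nonconformity scores $s_1,\ldots,s_n,s_{n+1}$, where $s_{n+1}=\|x_{n+1}-\hat{x}_{n+1}\|$. The overall strategy is a rank argument: I would show that, under exchangeability, the rank of $s_{n+1}$ among the augmented set $\{s_1,\ldots,s_n,s_{n+1}\}$ is (essentially) uniform on $\{1,\ldots,n+1\}$, and then translate this into a tail bound via the definition of $Q_{1-\alpha}$.

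First, I would fix notation: let $s_{(1)}\le s_{(2)}\le\cdots\le s_{(n)}$ be the order statistics of the calibration scores, and define $Q_{1-\alpha}$ as the $\lceil (1-\alpha)(n+1)\rceil$-th smallest element of the augmented multiset $\{s_1,\ldots,s_n,\infty\}$. The inclusion of $\infty$ is the standard device that makes the bound hold with no continuity assumption on the score distribution and handles the corner case when $\lceil(1-\alpha)(n+1)\rceil>n$. Next, I would invoke exchangeability of $(s_1,\ldots,s_{n+1})$, which follows from the assumption and is preserved by the symmetric transformation $(x_i,\hat{x}_i)\mapsto\|x_i-\hat{x}_i\|$. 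Exchangeability implies that for any $k\in\{1,\ldots,n+1\}$,
\begin{equation*}
\mathbb{P}\bigl(s_{n+1}\text{ is among the }k\text{ largest of }s_1,\ldots,s_{n+1}\bigr)\le \frac{k}{n+1},
\end{equation*}
with equality in the tie-free case.

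The key step is to connect the rank bound to the quantile. I would argue that the event $\{s_{n+1}>Q_{1-\alpha}\}$ implies $s_{n+1}$ strictly exceeds at least $\lceil(1-\alpha)(n+1)\rceil$ of the $n+1$ scores, hence its rank from the top is at most $n+1-\lceil(1-\alpha)(n+1)\rceil\le\alpha(n+1)$. Combining with the exchangeability bound yields
\begin{equation*}
\mathbb{P}\bigl(s_{n+1}>Q_{1-\alpha}\bigr)\le\frac{\lfloor\alpha(n+1)\rfloor}{n+1}\le\alpha,
\end{equation*}
and taking complements gives the desired $\mathbb{P}(\|x_{n+1}-\hat{x}_{n+1}\|\le Q_{1-\alpha})\ge 1-\alpha$.

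The main obstacle I anticipate is not conceptual but bookkeeping: ensuring that the quantile index, the ceiling/floor arithmetic, and the treatment of possible ties in the nonconformity scores are handled consistently so that the final inequality is sharp and the infinity padding does what it is supposed to. A secondary subtlety is justifying that exchangeability actually holds in our setting, where the $\hat{x}_i$ are produced by the \emph{same} trained Koopman model on calibration data generated under control inputs matched to deployment; this was argued informally before the proposition, and I would cite that discussion rather than re-derive it, since the formal statement of the proposition takes exchangeability as a hypothesis.
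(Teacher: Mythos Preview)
Your proposal is correct and follows essentially the same rank-based argument as the paper: the paper also invokes exchangeability to conclude the rank of the new score among the augmented set is uniform on $\{1,\ldots,n+1\}$, identifies $Q_{1-\alpha}$ as the $\lceil(n+1)(1-\alpha)\rceil$-th smallest of $\{s_1,\ldots,s_n,\infty\}$, and reads off $\mathbb{P}(s_{n+1}\le Q_{1-\alpha})\ge \lceil(n+1)(1-\alpha)\rceil/(n+1)\ge 1-\alpha$. Your treatment is simply a more careful version of the same argument, with explicit attention to ties and the ceiling/floor bookkeeping that the paper glosses over.
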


\begin{proof}
Under the exchangeability assumption of the nonconformity scores, the rank of $e_{n+1}$ among $\{e_1,...,e_n,e_{n+1}\}$ is uniformly distributed over $\{1,...,n+1\}$. By construction, $Q_{1-\alpha}$ is the $\lceil(n+1)(1-\alpha)\rceil$-th smallest value in $\{e_1,...,e_n,\infty\}$. Therefore, $\mathbb{P}(e_{n+1} \leq Q_{1-\alpha}) \geq \lceil(n+1)(1-\alpha)\rceil/(n+1) \geq 1-\alpha$.
\end{proof}

\begin{theorem}
\label{th: cp}
    For a given $\mathcal{F} = \{h(x) > 0\}$, where the safety constraints $h(x)$ is Lipschitz continuous with constant $L$, and quantile  threshold $\Delta$. Define the tighten set $\mathcal{F}_{\Delta} = \{x \in \mathbb{R}^n : h(x) \geq \Delta\}$. Then, if the predicted state from Koopman dynamics $\hat{x} \in \mathcal{F}_\Delta$, the actual system state $x$ remains in $\mathcal{F}$ with probability at least $1-\alpha$:
\begin{equation}
\mathbb{P}(x \in \mathcal{F} \mid \hat{x} \in \mathcal{F}_\Delta) \geq 1-\alpha
\end{equation}
\end{theorem}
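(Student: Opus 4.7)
The plan is to chain together the probabilistic bound on the Koopman reconstruction error from Proposition~\ref{prop:cp} with the Lipschitz property of the constraint function $h$. Conceptually, once we know that with probability at least $1-\alpha$ the true state $x$ lies within an $\ell_2$-ball of radius $Q_{1-\alpha}$ around the predicted state $\hat{x}$, any sufficient margin for $h$ enforced at the prediction level will translate into actual constraint satisfaction at the true-state level, and the tightening $\Delta$ is exactly what pays for this margin.

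Concretely, the first step is to invoke Proposition~\ref{prop:cp} to assert that with probability at least $1-\alpha$, we have $\|x-\hat{x}\|\le Q_{1-\alpha}$; from here on all reasoning is conditioned on this high-probability event. The second step is to apply Lipschitz continuity of $h$ with constant $L$, which gives $|h(x)-h(\hat{x})|\le L\|x-\hat{x}\|$, and in particular the one-sided bound $h(x)\ge h(\hat{x})-L\|x-\hat{x}\|$. Combining these two facts yields $h(x)\ge h(\hat{x})-L\,Q_{1-\alpha}$. The third step is to use the hypothesis $\hat{x}\in\mathcal{F}_\Delta$, i.e. $h(\hat{x})\ge \Delta$, so that $h(x)\ge \Delta - L\,Q_{1-\alpha}$. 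Choosing (or interpreting) the tightening threshold as $\Delta \ge L\,Q_{1-\alpha}$ then gives $h(x)\ge 0$, i.e. $x\in\mathcal{F}$, and the probability statement follows by the same $1-\alpha$ event used at the outset.

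The main obstacle, in my view, is not technical but expository: the theorem as written takes $\Delta$ as a \emph{given} quantile threshold, whereas the conclusion $x\in\mathcal{F}$ only follows when $\Delta$ is calibrated at least to $L\,Q_{1-\alpha}$. I would make this requirement explicit at the start of the proof, so the inequality chain closes cleanly. A minor additional point is the strict-versus-non-strict boundary mismatch between $\mathcal{F}=\{h>0\}$ and $\mathcal{F}_\Delta=\{h\ge\Delta\}$: taking $\Delta$ strictly larger than $L\,Q_{1-\alpha}$ (or observing that the boundary set $\{h=0\}$ has probability zero under any non-degenerate noise model) resolves this without affecting the $1-\alpha$ guarantee. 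Once these conventions are fixed, the proof itself is a two-line deterministic deduction on the conformal event, followed by one invocation of Proposition~\ref{prop:cp} to restore the probability bound.
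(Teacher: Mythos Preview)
Your proposal is correct and follows essentially the same approach as the paper: invoke the conformal bound $\|x-\hat x\|\le Q_{1-\alpha}$ (with probability $\ge 1-\alpha$), apply the Lipschitz inequality $h(x)\ge h(\hat x)-L\|x-\hat x\|$, and then use $h(\hat x)\ge\Delta$ with $\Delta$ calibrated to $L\,Q_{1-\alpha}$. The paper makes exactly the same two observations you flag as obstacles---it fixes $\Delta = L Q_{1-\alpha}$ inside the proof rather than in the statement, and it handles the strict-inequality mismatch by redefining $\Delta = L Q_{1-\alpha}+\epsilon$ for $\epsilon>0$---so your proof and the paper's are effectively identical.
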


\begin{proof}
    For any $x, \hat{x}$, $|h(x) - h(\hat x)| \leq L\|x - \hat{x}\|$. If $\hat{x} \in \mathcal{F}_{\Delta}$, then $h(\hat{x}) \geq \Delta = LQ_{1-\alpha}$.
    
    For any $\|x - \hat{x}\| \leq Q_{1-\alpha}$, we have
    \begin{align*}
        h(x) &\geq h(\hat{x}) - L\|x - \hat{x}\| \\
        &\geq LQ_{1-\alpha} - LQ_{1-\alpha} = 0
    \end{align*}
    % \end{equation*}
To ensure the strict safety $h(x) >0$, redefine $\Delta = LQ_{1-\alpha} + \epsilon$ for $\epsilon > 0$. Then
\begin{align*}
    h(x) \geq \epsilon
\end{align*} 
which implies $x \in \mathcal{F}$ and $\mathbb{P}(x \in \mathcal{F} \mid \hat{x} \in \mathcal{F}_\Delta) \geq 1-\alpha$.
\end{proof}

\begin{algorithm}
    \caption{Koopman-based MPC with  Conformal Prediction}
    \label{alg:koopman_rg_mpc}
    \begin{algorithmic}[1]
        \State \textbf{Input:} Dataset $\mathcal{D} = \{(x_k, u_k, x_{k+1})_{i}\}_{i=1}^M$, initial state $x_0$, goal state $x_{\text{goal}}$, confidence level $1-\alpha$, tolerance $\delta$
        
        \Statex {\color{blue}\textit{// Offline Training Phase}}
        
        \State Learn Koopman operator matrices $(A,B)$ and lifting function $g(\cdot)$ from dataset $\mathcal{D}$
        
        \State Collect calibration dataset $\mathcal{D}_{\text{cal}}$ under deployment-like conditions

        \State Compute nonconformity scores and determine threshold $Q_{1-\alpha}$ as the $(1-\alpha)$-quantile of $\{\|x_i - \hat{x}_i\|\}_{i=1}^n$ from $\mathcal{D}_{\text{cal}}$
        
        \State Compute constraint tightening margin $\Delta$
        
        \Statex {\color{blue}\textit{// Online Control Phase}}
        
        \While{$\| x_k - x_{\text{goal}}\| > \delta$}
        
            \State Measure current state $x_k$, compute lifted state $z_k = g(x_k)$
            
            \State Update obstacle positions and constraints.
            
            \State Compute safe reference waypoint:
            $$x_{\text{ref}} = RG(x_k, x_{\text{goal}},\Delta)$$
            
            \State Solve MPC optimization \eqref{eq: MPC} with reference $x_{\text{ref}}$ and tightened constraints in Koopman space
            
            \State Apply first control input to real system from MPC solution
            
            \State Set $k = k + 1$
            
        \EndWhile
        
        \State {\bf Output:} Safe trajectory $\{x_0,x_1,\dots,x_k\}$ 
        
    \end{algorithmic}
\end{algorithm}
From Theorem~\ref{th: cp}, we can quantify the approximation error bounds for our Koopman operator representation learned. This enables us to derive tightened safety constraints in the Koopman space that provide probabilistic guarantees on constraint satisfaction in the original state space. These tightened constraints are then directly used for control synthesis to ensure safe navigation despite approximation errors.
\subsection{Model Predictive Controller}
In this work, we opt to use MPC in Koopman space for controlling the nonlinear dynamical systems. At each time step $k$, the MPC is formulated as the following optimization problem:

\begin{subequations}
\label{eq: MPC}
    \begin{align}
J = \min_{z, u} \quad & \sum_{i=0}^{N-1} \|(z_{k+i|k} - \bar z_{k})\|_{Q}^2 + \|u_{k+i|k}\|_{R}^2\label{eq:obj}\\
\text{subject to} \quad & z_{k+i+1|k} = A z_{k+i|k} + B u_{k+i|k}, \\
&\quad h(g^{-1}(z_{k+i|k})) \geq \Delta \label{eq:cp_constrain}\\\notag
& u_{k+i|k} \in \mathcal{U}, \quad i = 0, 1, \ldots, N-1,
\end{align}
\end{subequations}
where $N$ is the prediction horizon. The matrices $Q \in \mathbb{R}^{p\times p}$ and $R\in\mathbb{R}^{m \times m}$ are penalties for tracking error and control, respectively. $\hat {z}_k$ is the tracking reference at time step $k$. The safe set is defined in \eqref{eq:cp_constrain} from the conformal prediction.

The reference state $\bar z_{k}$ in the objective function \eqref{eq:obj} is provided by a high-level reference generator, which aims to guide the system toward the goal state while the MPC ensures safety. 
In practice, the parameter $\Delta$ balances safety and performance. A higher value of $\Delta$ (higher confidence level $1-\alpha$) provides stronger safety guarantees but leads to more conservative behavior.

In this paper, we linearize obstacle constraints in \eqref{eq:cp_constrain} using half space planes shown in Figure~\ref{fig:saferegion}, following the convex feasible set approach formalized in~\cite{liu2018convex}. This linearization approach translate the safety constraints into standard linear constraints
$Hx + b \geq 0$,
where $H\in\mathbb{R}^{n\times n}$ is a matrix representing the normal vectors of the half space constraints, and $b\in\mathbb{R}^{N}$ is an offset vector. This half space formulation is effective for representing convex obstacles because it ensures that the region outside the obstacle is convex, allowing for efficient optimization techniques to be applied. 
Combining these linear constraints with the linear representation of system dynamics provided by the Koopman operator, we obtain a quadratic optimization problem in~\eqref{eq: MPC}. 
\begin{figure}
    \centering
    \includegraphics[width=0.5\linewidth]{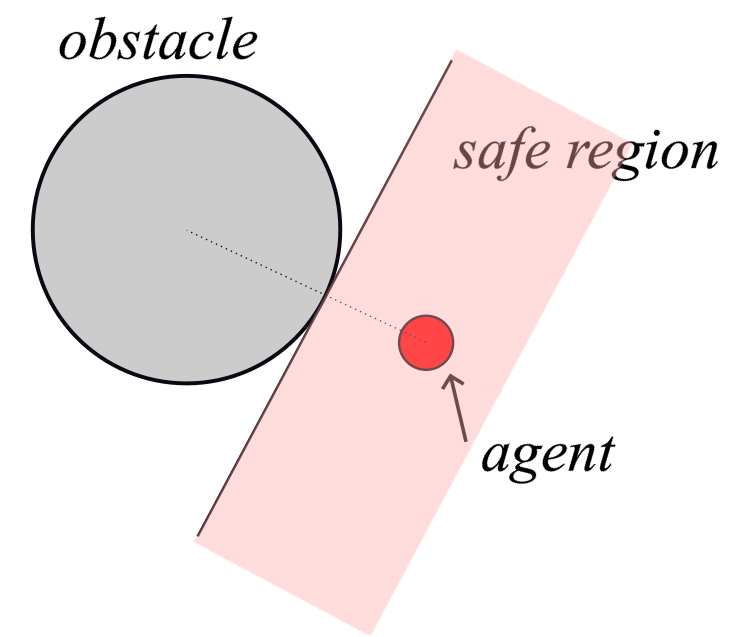}
    \caption{Half Space Constraints}
    \label{fig:saferegion}
\end{figure}

Implementing direct hard safety constraints $h(g^{-1}(z_k)) \geq \Delta$ in MPC can lead to infeasibility problems when dealing with system uncertainty. When the nominal predicted state $\hat{x}_k$ from MPC is near the constraint boundary, the actual system state $x_k$ may deviate due to model uncertainty while still remaining physically safe (i.e., $\Delta \geq h(x_k) \geq 0$. However, this deviation can cause the optimization problem to become infeasible in subsequent time steps, as the hard constraint would be violated in the prediction horizon despite the system remaining in the safe set. 

Established robust MPC approaches for handling such uncertainties in different approaches such as constraint softening and tube MPC~\cite{langson2004robust}, which employs additional feedback control layers to guarantee convergence despite bounded uncertainties. In this paper, we use the constraint softening idea introduced in \cite{zeilinger2014soft}, where the MPC introduces slack variables with associated penalties rather than enforcing hard constraints. 
Specifically, we reformulate the safety constraints in (6c) as:
\begin{align}
Hx + b &\geq \Delta - \epsilon_s - \epsilon_k, \quad \forall k \in \{0,\ldots,N-1\} \\
\epsilon_s + \epsilon_k &\leq \epsilon_{\text{max}}, \quad \epsilon_s, \epsilon_k \geq 0
\end{align}
and add penalty terms to the objective function in~\eqref{eq:obj}:
\begin{align}
J_{penalty} = \sum_{k=0}^{N-1} \|\epsilon_k\|_S^2 + \rho_1\|\epsilon_k\|_p + \|\epsilon_s\|_S^2 + \rho_1\|\epsilon_s\|_p
\end{align}
where $\epsilon_s$ represents the minimum constraint relaxation necessary to maintain feasibility, and $\epsilon_k$ represents additional relaxation at each prediction step. $S$ is a positive definite weight matrix, $\rho_1 > 0$ is a penalty parameter, and $p \in \{1,\infty\}$. This slack variable structure can allow constraint violations when necessary.

Exclusive dependence on soft constraints for path planning also presents challenges in practice. Since penalty terms for slack variables only become active after constraints are violated, the controller may allow the system to approach boundaries too closely before taking corrective action. This reactive behavior can result in inefficient zigzag trajectories and degraded performance. Our approach addresses this limitation by incorporating a high-level reference generator that generates local tracking targets $\bar z_k$ while proactively considering tightened constraints, leading to smoother and more efficient navigation.

The reference generator in line 9 can be implemented through various approaches, including optimization-based and sampling-based strategies. In this paper, we provide only the minimum environment information and the agent's approximated dynamics (i.e., single integrator dynamics). Different from \cite{chee2024uncertainty}, we do not requires a close tracking assumption or strict safety in the reference generator as these are already encoded in the MPC, which ensures that safety guarantees are maintained at the lowest level of control hierarchy. The primary purpose of the reference generator is to provide directional guidance that proactively accounts for tightened constraints, thereby preventing the zigzagging behavior that typically occurs when relying solely on reactive soft constraints. Our results demonstrate that a simple reference generator is sufficient without requiring precise agent control.

In lines 10-12, we solve the optimization problem \eqref{eq: MPC} to obtain the optimal control sequence, apply only the first control input to the real system, and then measure the resulting next state. The process repeats until the task is finished.

In summary, our framework addresses the approximated state error from the learned Koopman model through conformal prediction techniques. By incorporating linear safety constraints and linear representations of the non-linear dynamics, we construct a linear MPC  that can be computed efficiently and has a probabilistic guarantee.

% \subsection{reference generator}
% Since our MPC formulation directly incorporates safety constraints, the reference generation mechanism can prioritize goal-directed behavior without strict safety requirements. This separation allows flexibility in how references are generated - one could use various approaches including sampling-based methods for reference generation, while the MPC ensures safety through constraint enforcement. In our framework, we implement a reference generator that provides intermediary waypoints, enabling the system to navigate toward its goal while the MPC handles the immediate safety guarantees.

\section{Experiment}
We validate our framework on a unicycle model with nonlinear discrete-time dynamics:
\begin{equation}
\begin{aligned}
        x_{k+1} &= x_k + dt\cdot v_k\cos(\theta_k)\\
        y_{k+1} &= y_k + dt\cdot v_k\sin(\theta_k)\\
        \theta_{k+1} &= \theta_k + dt\cdot \omega_k,
\end{aligned}
\end{equation}
where $[x_k, y_k]$ represents the position and $\theta_k$ is the heading angle at time step $k$. The control inputs are the translational velocity $v_k$ and angular velocity $\omega_k$, and $dt = 0.1s$ is the sampling time. The dimension of the lifted space is set to be $11$, and learn the Koopman representation of these dynamics using collected trajectory data. 
The task is to navigate through obstacles and visit a series of target points in sequence with the prediction horizon $N=10$.

\begin{figure}[hbt!]
    \centering
\includegraphics[width=1.2\linewidth]{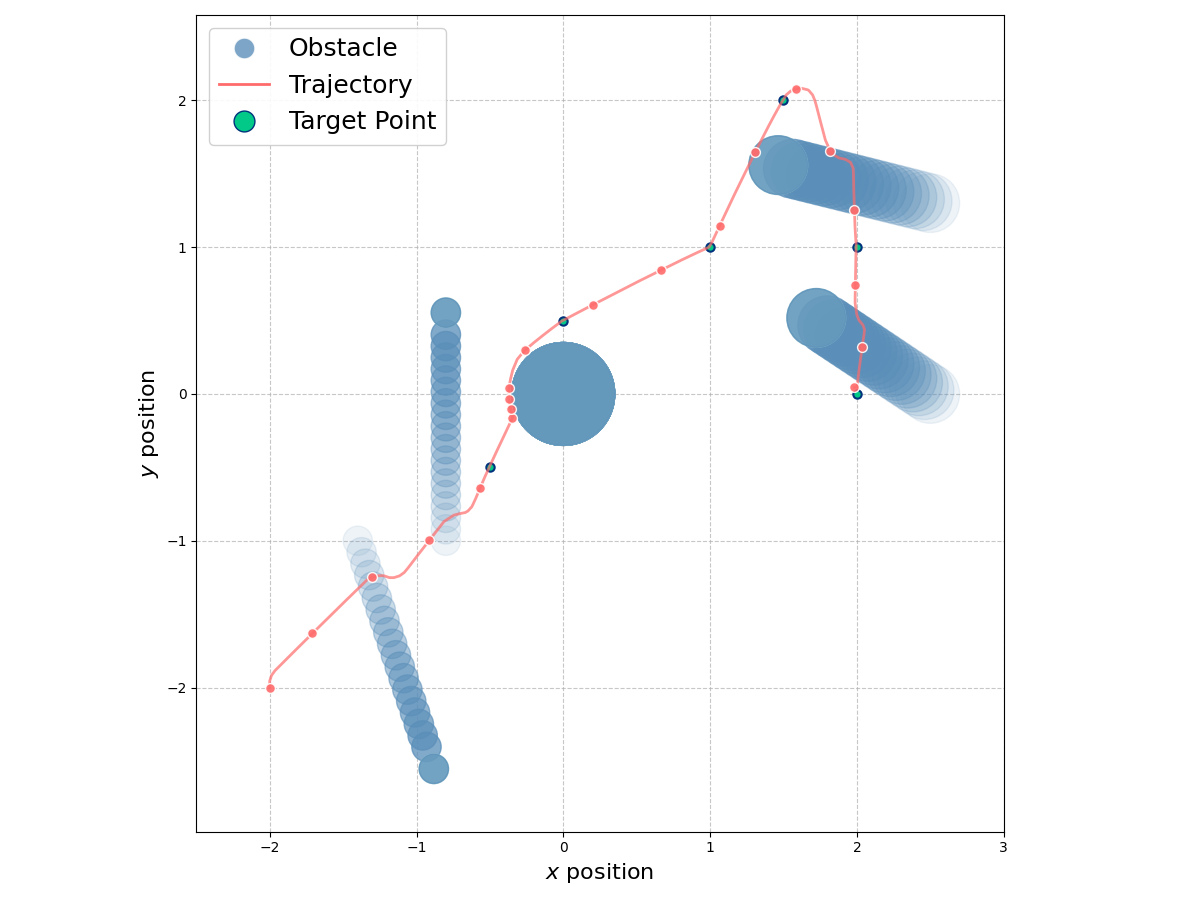}
    \caption{Navigation with Dynamic Environment: Agent's trajectory is shown in orange line, where it goes through a sequence of designated target points (marked as green dots). Dynamic obstacles are shown as blue shaded regions where the opacity represents their position over time. Darker blue indicates the obstacle's more recent position, while lighter blue shows where it was previously. Agent starts at location $(-2, -2)$ and ends at location $(2, 0)$.}
    \label{fig:env}
\end{figure}

Figure~\ref{fig:env} shows the trajectory in navigating an environment with obstacles. The linear MPC problem formulated in~\eqref{eq:mpc} is solved using Gurobi~\cite{gurobi}. The half space safety constraint is defined as $a_ix + b_iy + c_i \geq C_{1-\alpha}$, where $a_i, b_i, c_i$ are calculated based on the position of the i-th obstacle and the position of the agent $x, y$. We choose $\alpha := 0.02$ meaning $98$ percent confidence for the confidence interval. The trajectory show the our approach successfully reaches the targets and maintain safety.

\begin{figure}[hbt!]
    \centering
    \includegraphics[width=1\linewidth]{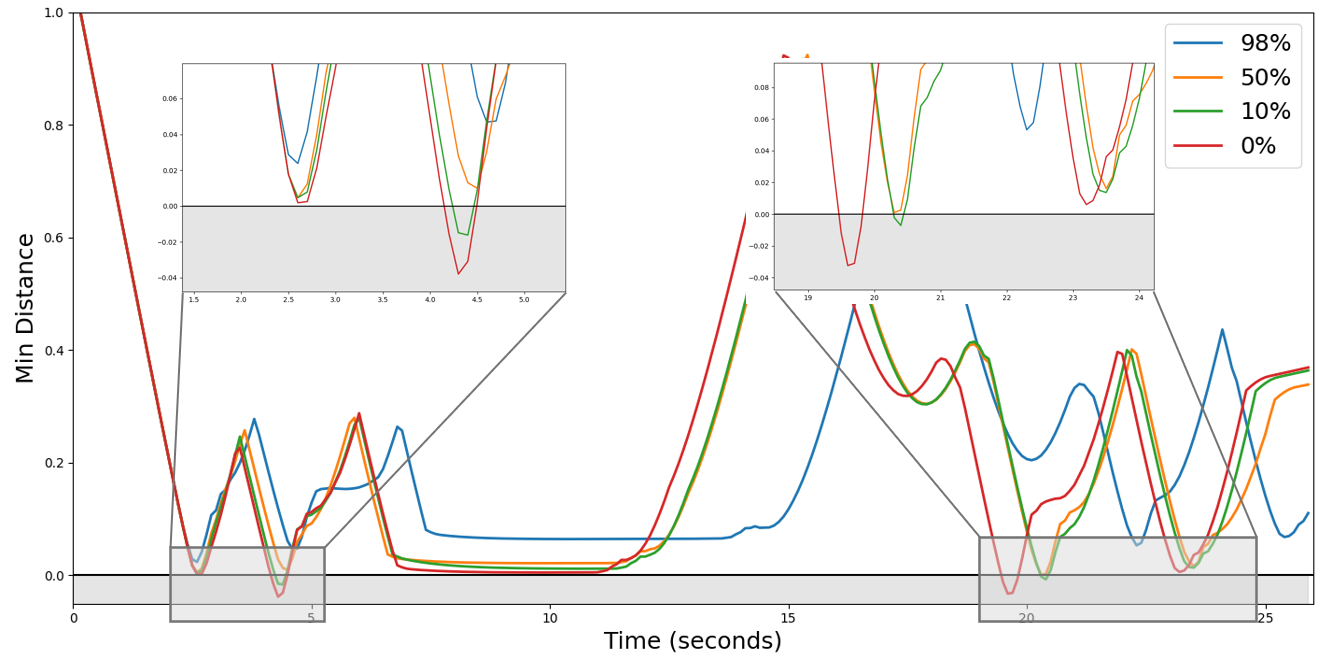}
    \caption{Comparison between different confidence levels. Collision time intervals are highlighted. Lower confidence levels lead to increased risk, resulting more collisions. Any lines that dips into the grey area represent collisions. }
    \label{fig:compare_cp}
\end{figure}

Figure~\ref{fig:compare_cp} presents a comparison of navigation performance using different levels of conformal prediction confidence $98\%, 50\%, 10\%$ and $0$. The y-axis represents the minimum distance to the nearest obstacle throughout the trajectory, with positive values indicating safe clearance and negative values representing collision with obstacles. As the confidence level decreases (and consequently the safety margin $\Delta$ shrinks), we observe increased risk-taking behavior in the navigation. The results clearly demonstrate that lower confidence thresholds $10\%$ and $0$ lead to safety violations at multiple time points, while the highest confidence level $98\%$ consistently maintains a safer distance from obstacles throughout the simulation.
\begin{figure}[htb!]
    \centering
    \begin{subfigure}[b]{0.9\linewidth}
        \includegraphics[width=\linewidth]{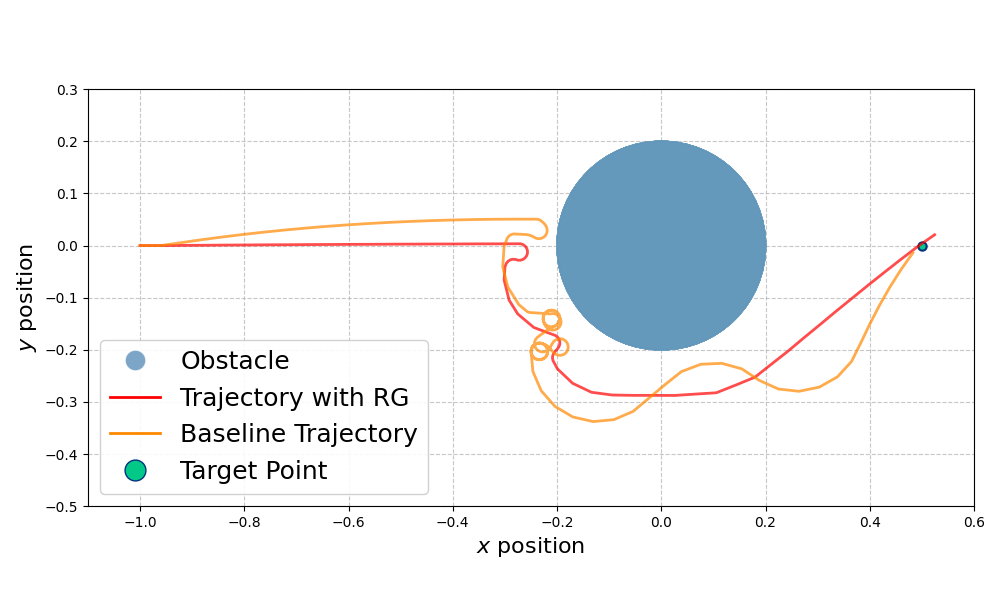}
        \caption{} 
        \label{subfig: rg_traj}
    \end{subfigure}
    \begin{subfigure}[b]{1\linewidth}
    
    \includegraphics[width=\linewidth]
    {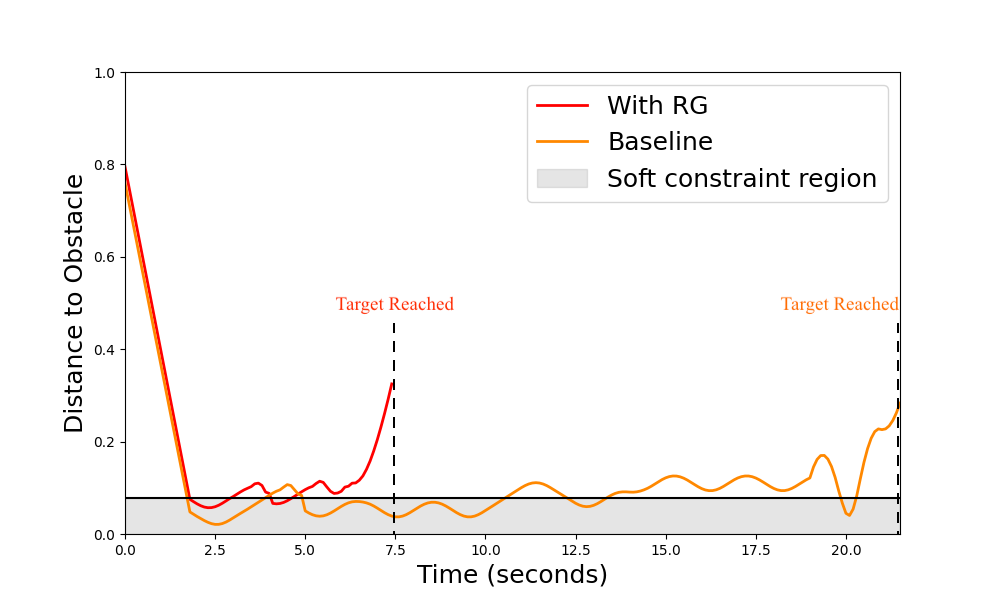} 
    \caption{} 
    \label{subfig: rg_dist}
    \end{subfigure}
    \caption{Comparison between the proposed approach with reference generator and baseline using only soft constraints. Agent starts from $(-1, 0)$ and ends at $(0.5,0)$
    (a) The trajectory comparison reveals that our agent follows a smooth, efficient path when guided by the reference generator, in contrast to the zigzag movement exhibited by the baseline method. (b) Distance to obstacles during navigation, where the shaded region indicates soft constraint activation without collision. Our method also outperforms the baseline in terms of task completion time.}
    \label{fig: compare_rg}
\end{figure}

Figure \ref{fig: compare_rg} illustrates the performance difference between our proposed approach with reference generator and a baseline using only soft constraints, both configured with identical conformal prediction confidence levels. The task shown in Figure~\ref{subfig: rg_traj} is to navigate through one obstacle and reach the target. As evident in the trajectories, our reference generator approach proactively generates optimized local tracking points that lead to smoother trajectories and improved navigation performance with fewer near-obstacle encounters. Figure~\ref{subfig: rg_dist} shows the distance to the obstacle at each time before reaching the target. The baseline method displays distinctive zigzag patterns as it relies solely on soft constraints that activate only after violations occur. This reactive approach not only forces the system into constant compensatory corrections but also significantly delays reaching to targets, resulting in less efficient overall performance. It is worthwhile to point out that unlike \cite{chee2024uncertainty}, our approach uses Koopman operator theory to derive linear representations of system dynamics directly, eliminating the need for a nominal model with separate uncertainty characterization. By integrating Koopman operators with conformal prediction, we establish a unified framework that simultaneously addresses nonlinear dynamics and uncertainty coming from the approximation error. Additionally, our implementation of soft constraints enhances practicality by allowing temporary violations, thus significantly alleviating the infeasibility issues commonly associated with hard constraints. Our approach enables a more convenient practical implementation during robot deployments thanks to the linear dynamics and obstacle constraints. 

For comparison, our linear MPC framework exhibits a improved computational efficiency compared with nonlinear MPC. Under a test scenario where all MPCs are feasible throughout the mission, the computation time per MPC step was reduced for scenarios with linearized constraints that maintain convexity of the MPC problem, demonstrating a 26\% speed improvement compared to scenarios with nonlinear constraints, which introduce non-convexity. Moreover, the performance gain remains mostly the same across conformal prediction confidence levels. The results demonstrate the importance of formulate linear MPC by learning the linear representation of the nonlinear systems, as well as incorporating the obstacles as linear constraints. 

\section{Conclusion}
This paper introduces a novel framework for safe navigation by combining Koopman operator theory with conformal prediction. By utilizing the Koopman operator to achieve a linear representation of nonlinear systems and applying conformal prediction to quantify uncertainty, we establish statistical guarantees on safety constraints. The problem is formulated within a linear MPC framework, and the effectiveness of our approach is demonstrated through experiments. The results also highlight the capability of a reference governor in steering the agent away from obstacles while maintaining navigation efficiency. Future work will focus on extending this approach to robots with more complex dynamics and deploying it in real-world scenarios. Additionally, we will explore more effective methods for modeling obstacles online by representing them as convex sets in dynamic environments

% \section{Literature}

% 1.Task-Oriented Koopman-Based Control with
% Contrastive Encoder

% 2. Koopman Operator Based Modeling for Quadrotor Control on SE(3)

% 3. Online Time-Informed Kinodynamic Motion Planning of Nonlinear Systems

% 4. Github: https://github.com/BethanyL/DeepKoopman/blob/master/training.py

\bibliographystyle{IEEEtran}
\bibliography{reference}

\end{document}